\newtheorem{theorem}{Theorem}
\newtheorem{definition}{Definition}
\newcommand{\Amc}[0]{{{\mathcal{A}}}}
\newcommand{\Dmc}[0]{{{\mathcal{D}}}}
\newcommand{\Lmc}[0]{{{\mathcal{L}}}}
\newcommand{\Pmc}[0]{{{\mathcal{P}}}}
\newcommand{\Smc}[0]{{{\mathcal{S}}}}
\newcommand{\piv}[0]{{\bm{\pi}}}
\newcommand{\muv}[0]{{\bm{\mu}}}
\newcommand{\Ebb}{\mathbb{E}}
\newcommand{\Rbb}{\mathbb{R}}
\newcommand{\Nbb}{\mathbb{N}}
\DeclareMathOperator*{\argmax}{arg\,max}
\newcommand{\KL}{\mathrm{KL}}
\newcommand{\cmark}{\ding{51}}%
\newcommand{\xmark}{\ding{55}}%
\definecolor{Gray}{gray}{0.85}
\definecolor{LightCyan}{rgb}{0.88,1,1}
\newcolumntype{a}{>{\columncolor{Gray}}c}
\newcolumntype{b}{>{\columncolor{white}}c}
\newcommand\titlelowercase[1]{\texorpdfstring{\lowercase{#1}}{#1}}
\title{Individual-Level Inverse Reinforcement Learning for\\ Mean Field Games} 
\author{Yang Chen}
\affiliation{
  \institution{The University of Auckland}
  \city{Auckland}
  \country{New Zealand}}
\email{yang.chen@auckland.ac.nz}
\author{Libo Zhang}
\affiliation{
  \institution{The University of Auckland}
  \city{Auckland}
  \country{New Zealand}}
\email{lzha797@aucklanduni.ac.nz}
\author{Jiamou Liu}
\affiliation{
  \institution{The University of Auckland}
  \city{Auckland}
  \country{New Zealand}}
\email{jiamou.liu@auckland.ac.nz}
\author{Shuyue Hu}
\affiliation{
  \institution{National University of Singapore}
  \country{Singapore}
  }
\email{dcshus@nus.edu.sg}
\begin{abstract}
The recent mean field game (MFG) formalism has enabled the application of inverse reinforcement learning (IRL) methods in large-scale multi-agent systems, with the goal of inferring reward signals that can explain demonstrated behaviours of large populations. The existing IRL methods for MFGs are built upon reducing an MFG to a Markov decision process (MDP) defined on the collective behaviours and average rewards of the population. However, this paper reveals that the reduction from MFG to MDP holds only for the fully cooperative setting. This limitation invalidates existing IRL methods on MFGs with non-cooperative environments. To measure more general behaviours in large populations, we study the use of individual behaviours to infer ground-truth reward functions for MFGs. We propose {\em Mean Field IRL} (MFIRL), the first dedicated IRL framework for MFGs that can handle both cooperative and non-cooperative environments. Based on this theoretically justified framework, we develop a practical algorithm effective for MFGs with unknown dynamics. We evaluate MFIRL on both cooperative and mixed cooperative-competitive scenarios with many agents. Results demonstrate that MFIRL excels in reward recovery, sample efficiency and robustness in the face of changing dynamics. 
\end{abstract}
\keywords{Inverse Reinforcement Learning; Mean Field Games; Individual Level; Mean Field Nash Equilibrium}
\newcommand{\BibTeX}{\rm B\kern-.05em{\sc i\kern-.025em b}\kern-.08em\TeX}
\begin{document}


\pagestyle{fancy}
\fancyhead{}


\maketitle 


\section{Introduction}

Inverse reinforcement learning (IRL) is concerned with one or multiple agents operating in an environment that is agnostic towards reward signals. It provides a powerful solution to learn behavioural models by inferring reward functions from demonstrations. 
The majority of its successful applications, however, deal with systems with a handful of agents \cite{kretzschmar2016socially,bogert2014multi}. Yet, many scenarios involve a much larger population, such as traffic networks with millions of vehicles \cite{bazzan2009opportunities}, online games with massive players \cite{jeong2015analysis}, and online businesses with a large customer body \cite{ahn2007analysis}. %
Applying IRL methods to such large-scale systems is intractable due to the exponential growth of joint state-action spaces and agent interactions.

 
A promising concept to achieve tractability for modelling large-scale multi-agent systems (MAS) is the {\em mean field game} (MFG) \cite{lasry2007mean,huang2006large} that uses mean field theory to simplify interactions among a large number of  agents. An MFG views agents as a homogeneous population, i.e., they are identical, indistinguishable, and interchangeable \citep{huang2003individual}. It thus uses a single entity, termed {\em mean field}, to denote the statistical information of the overall population, rather than modelling each agent individually. 
The interactions among agents are therefore reduced to those between a single representative agent and the overall population. This reduction to a dual-view interplay enables characterising the optimal behaviours in large-scale MAS using {\em mean field Nash equilibrium} (MFNE), where each agent's policy is a best response to the mean field and the mean field is in turn consistent with the policy. 
MFGs have enabled applications in many fields such as economics \cite{lachapelle2010computation,gomes2015economic}, finance \cite{cardaliaguet2018mean,casgrain2019algorithmic} and crowd motion \cite{lachapelle2011mean,burger2014mean}. 
To measure behaviours in large populations, it is thus promising to study IRL for MFGs, which aims to uncover reward signals behind demonstrated MFNE behaviours.


A recent research \cite{yang2018learning} studies IRL in MFGs, which  shows that an MFG can be reduced to a Markov decision process (MDP). It thus extends IRL to MFGs via applying existing single-agent IRL methods to this MDP. Since this MDP describes the population's {\em collective behaviours} driven by the {\em societal} reward (i.e., the average reward of the population), we henceforth call this method {\em population-level IRL}. 
However, we reveal in this paper that the reduction from MFG to MDP holds only for the {\em fully cooperative} setting, i.e., all agents share the same societal reward. 
Consequently, population-level IRL is prone to biased reward inferences in non-cooperative (competitive or mixed cooperative-competitive) environments. 
In contrast to the societal reward, the reward of each {\em individual} can capture each agent's real intention, regardless of whether the environment is cooperative or not.
To model and predict both cooperative and non-cooperative behaviours in large-scale MAS, it is thus important to consider inferring rewards of individual agents in MFGs. 



In this paper, considering the preliminary investigation of IRL in MFGs and limitations of population-level IRL, we study IRL for MFGs at the individual level. We make the following contributions to the field: 
(1) We begin with a theoretical justification for the fact that the reduction from MFG to MDP holds only for the fully cooperative setting. The exposure of this limitation reveals the restricted suitability of the existing MFG-MDP reduction-based IRL and reinforcement learning (RL) methods for general MFGs. 
(2) Towards measuring both cooperative and non-cooperative interactions in MFGs, we then propose and formalise the problem of {\em individual-level IRL} for MFGs. This new problem formulation targets recovering the individual reward function from demonstrated individual behaviours, which is not amenable to population-level IRL. 
(3) To solve this problem, we put forward a novel and dedicated IRL framework for MFGs, called {\em Mean Field IRL} (MFIRL). 
We show that MFIRL can recover a suitable reward function under standard assumptions, no matter whether the environment is cooperative or not. 
(4) Based on this new framework, we develop a practical algorithm effective for MFGs with unknown dynamics. 
(5) We empirically evaluate MFIRL 
on five numerical models and simulated battle games, both with cooperative and non-cooperative environments. In non-cooperative scenarios, MFIRL outperforms population-level IRL in terms of reward recovery and robustness against changing dynamics. Moreover, in cooperative scenarios, a notable advantage of MFIRL is that it requires fewer samples to achieve comparable performance.

\section{Related Work}\label{sec:discussion}

\subsubsection*{RL for MFGs} 
MFGs were independently proposed by \citet{lasry2007mean} and \citet{huang2006large} in the continuous-time setting. Mathematically, the system dynamics is governed by two stochastic differential equations: one models the backward dynamics of a representative agent's value functions; the other models the forward dynamics of the mean field. Discrete-time MFG models, adopted in this paper, were then proposed in \cite{gomes2010discrete}. RL in MFGs has become a burgeoning research field recently. \citet{yang2018mean} and \citet{ganapathi2020multi} used mean field theory to approximate joint actions in large-population stochastic games to approximate Nash equilibria. \citet{mguni2018decentralised} proposed a decentralised RL method for MFGs. \citet{guo2019learning} presented a Q-learning-based algorithm for computing {\em stationary MFNE}. \citet{subramanian2019reinforcement} used RL to compute {\em local MFNE} (a relaxed version). While, all these works presuppose the presence of reward functions. Our work takes a complementary view that the reward function is difficult to specify, and hence the necessity for IRL for MFGs.
\vspace{-.5em}

\subsubsection*{IRL for MDPs} 
The problem of IRL was first studied by \citet{ng2000algorithms} on MDPs. 
Existing IRL methods typically fall into the following categories: (1) {\em margin optimisation} based methods \cite{abbeel2004apprenticeship,ratliff2006maximum,syed2007game,pirotta2016inverse} that find a reward by creating a margin between the expert policy and any other policy in terms of rewards; (2) {\em Bayesian IRL} methods \cite{ramachandran2007bayesian,choi2011map,lopes2009active,levine2011nonlinear} that use demonstrations to facilitate a Bayesian update of a prior distribution over candidate reward functions; (3) {\em maximum entropy IRL} methods \cite{ziebart2008maximum,ziebart2010modeling,finn2016guided,fu2018learning} that use a probabilistic framework to find the policy maximising the entropy of expert demonstrations. 
As explained above, although the reduction from MFG to MDP enables applying these MDP-based IRL methods to MFGs, they can only deal with fully cooperative environments. In this paper, we formalise the problem of individual-level IRL for MFGs using the idea of margin optimisation, where the expert policy and any other policy is separated by a margin in terms of rewards, based on the expert demonstrated mean field.  


\subsubsection*{IRL for MAS}
Recently, some research has explored IRL in the multi-agent setting. Most of these works assume specific reward structures, including fully cooperative games
\cite{bogert2014multi,barrett2017making}, fully competitive games \cite{lin2014multi}, or either of the two \cite{waugh2011computational,reddy2012inverse}. For general stochastic games, 
\citet{yu2019multi} presented MA-AIRL, a multi-agent IRL method using adversarial learning. Another line of works studied inverse dynamic games \cite{rothfuss2017inverse,le2021lucidgames,peters2021inferring}, which aims to infer the cost functions for dynamic games from a control perspective. However, all these prior methods scale poorly to a large agent number. 
\citet{vsovsic2017inverse} proposed SwarmIRL that views a large-scale MAS as a swarm system consisting of homogeneous agents. However, it cannot handle non-stationary policies and non-linear reward functions. Our work makes no modelling assumptions on policies and reward functions.

\section{Preliminaries}
The formalism of {\em mean field game} (MFG) \cite{lasry2007mean,huang2006large} offers a mathmetically tractable model for analysing large-scale multi-agent systems. It approximates the interactions among homogeneous agents by those between a representative agent and the population. In this section, we introduce the formulation of MFGs and some standard equilibrium concepts which we will build upon in our method.

\subsection{Mean Field Games}\label{sec:MFG}
Throughout the paper,  we focus on MFGs with finite state-action spaces and finite time horizons \cite{elie2020convergence}. 
First, consider an $N$-player game where all agents share the same local state 
space $\Smc$ and action space $\Amc$. A {\em joint state} is a tuple $(s^1,\ldots,s^N)\in \Smc^N$ where $s^i\in \Smc$ is the state of the $i$th agent. Taking the limit as $N \to \infty$, instead of modelling each agent individually, MFGs model a single representative agent and collapse the joint state into an empirical distribution $\mu \in \Pmc(\Smc)$, called a {\em mean field}, given by 
\begin{equation}
	\mu(s) \triangleq \lim_{N \to \infty} \frac{1}{N} \sum_{i=1}^N \mathds{1}_{\{s^i = s\}},
\end{equation}
where $\mathds{1}$ denotes the indicator function (i.e., $\mathds{1}_{\{x\}} = 1$ if $x$ is true and $0$ otherwise) and $\Pmc(\Smc)$ denotes the set of probability distributions over $\Smc$. The {\em transition function} $P \colon \Smc \times \Amc \times \Pmc(\Smc) \times \Smc \rightarrow [0,1]$ specifies how an agent's states evolve, i.e., an agent transits to the next state  $s_{t+1}$ with probability $P(s_{t+1} \vert s_t,a_t,\mu_t)$, depending on its current state, action, and mean field. 
Let $T \in \Nbb^+$ denote a finite time horizon. A {\em mean field flow} (MF flow for short) consists of a sequence of $T+1$ mean fields $\muv \triangleq \{\mu_t\}_{t=0}^{T}$, where the initial value $\mu_0$ is given. 
The {\em running reward} of an agent is specified by the {\em reward function} $r\colon \Smc \times \Amc \times \Pmc(\Smc) \rightarrow \Rbb$ with the exception that the reward at the last step ($t = T$) is defined separately. Following the convention, we set it as zero \citep{yang2018learning,elie2020convergence}. 
The agent's long-term reward is thus the sum 
$\sum_{t=0}^{T-1} \gamma^t r(s_t,a_t,\mu_t)$, where $\gamma \in (0,1]$ is the {\em discounted factor}. 
To summarise, an MFG is defined as a tuple $(\Smc, \Amc, P, \mu_0, r,\gamma)$. 

MFGs adopt a {\em time-varying stochastic policy} $\piv \triangleq \{\pi_t\}_{t=0}^{T}$ to characterise a strategic agent, where $\pi_t \colon \Smc \rightarrow \Pmc(\Amc)$ 
is the {\em per-step policy} at step $t$, i.e., $\pi_t$ directs the agent to choose action $a_t\sim \pi_t(\cdot\vert s_t)$. 
Given a policy $\piv$, the {\em expected return} (cumulative rewards) of an agent during the whole course of a game while interacting with an MF flow $\muv$ is given by
\begin{equation}\label{eq:exp-return}
	J(\muv,\piv) \triangleq \mathbb{E}\left[\sum_{t=0}^{T-1} \gamma^t r(s_t, a_t, \mu_t) ~\Big\vert~ s_0 \sim \mu_0, \muv, \piv, P \right].
\end{equation}

At the individual level, an agent seeks an optimal control in the form of a policy to maximise the expected return. Fixing an MF flow $\muv$, a policy $\piv$ is called a {\em best response} to $\muv$ if it maximises $J(\muv, \piv)$.
We denote the set of all best-response policies to a given $\muv$ by 
\begin{equation}\label{eq:best-response-policies}
	\Psi (\muv) \triangleq \argmax_{\piv} J(\muv, \piv).
\end{equation}
Since all agents are homogeneous, MFG prescribes that every agent uses the same policy. 
The dynamics of MF flow is thereby governed by the (discrete-time) {\em McKean-Vlasov} (MKV) equation \cite{carmona2013control}:
\begin{equation}\label{eq:MKV}
	\mu_{t+1}(s') = \sum_{s \in \Smc} \mu_t(s) \sum_{a \in \Amc} \pi_t(a \vert s)\; P(s' \vert s, a, \mu_t).
\end{equation}
Denote $\Phi(\piv)$ as the MF flow fulfilling the MKV equation above given a policy $\piv$. We say $\muv$ is {\em consistent} with $\piv$ if $\muv = \Phi(\piv)$.

At the population level, if all agents use the same policy $\piv$, the population state distribution (i.e., the mean field $\mu_t$) matches each individual's state visitation distribution (i.e., $P(\cdot \vert s_{t-1}, a_{t-1}, \mu_{t-1})$) \cite{guo2019learning}. As a result, the cumulative {\em societal rewards} (population's average rewards) coincide with an individual's expected return $J(\muv, \piv)$. Formally, let $\bar{r}(\mu, \pi)$ denote the societal reward when all agents play the same per-step policy $\pi$ under the mean field $\mu$:
\begin{equation}\label{eq:avg-reward}
	\bar{r}(\mu, \pi) \triangleq \sum_{s \in \Smc} \mu(s) \sum_{a \in \Amc} \pi(a \vert s) r(s,a,\mu).
\end{equation}
We can express $J(\muv, \piv)$ in terms of the societal reward as follows:
\begin{equation}\label{eq:societal-rewards}
	J(\muv, \piv) = \sum_{t=0}^{T-1} \gamma^t \bar{r}(\mu_t, \pi_t) \;\;\text{ if }\;\; \muv = \Phi(\piv).
\end{equation}
		
\subsection{Mean Field Nash Equilibrium}\label{sec:MFNE}

When agents are strategic (non-cooperative), the {\em mean field Nash equilibrium} (MFNE) is adopted as the solution concept, where all agents use the same best-response policy to the MF flow. Meanwhile, the MF flow is consistent with the policy.

\begin{definition}[Mean Field Nash Equilibrium]\label{def:MFE}
	A pair of MF flow and policy $(\muv^{\star}, \piv^{\star})$ is called a {\em mean field Nash equilibrium} if
	\begin{itemize}
		\item Agent rationality: $\piv^\star \in \Psi(\muv^\star)$;
		\item Population consistency: $\muv^{\star} = \Phi(\piv^{\star})$.
	\end{itemize}
\end{definition}

An MFG admits an MFNE under the standard assumptions in game theory
\cite{saldi2018markov,cui2021approximately}. The computation of MFNE typically involves a fixed-point iteration procedure for the MF flow. More formally, defining any mapping $\hat{\Psi}: \muv \mapsto \piv$ that identifies a best-response policy in $\Psi(\muv)$, we obtain the fixed-point iteration for the MF flow by alternating between $\piv = \hat{\Psi}(\muv)$ and $\muv = \Phi(\piv)$. The standard assumption for the uniqueness of MFNE is 
that the fixed-point iteration will converge to a unique MF flow.
However, it does not hold in general \cite{cui2021approximately}, which implies the existence of multiple MFNE.

\subsection{Mean Field Social Optimum}\label{sec:MFSO}
When agents are cooperative, the solution concept is the {\em mean field social optimum} (MFSO), which maximises cumulative societal rewards whilst satisfies the condition of population consistency.
\begin{definition}[Mean Field Social Optimum]\label{def:MFE}
	A pair of MF flow and policy $(\bar{\muv}^{\star},\bar{\piv}^{\star})$ is called a {\em mean field social optimum} if
	\begin{itemize}
		\item Maximum cumulative societal rewards: $J(\bar{\muv}^{\star}, \bar{\piv}^{\star}) \geq J(\muv, \piv)$ for any $(\muv, \piv)$ satisfying $\muv = \Phi(\piv)$;
		\item Population consistency: $\bar{\muv}^{\star} = \Phi(\bar{\piv}^{\star})$.
	\end{itemize}
\end{definition}
MFSO is not equivalent to MFNE. In an MFNE, $\piv^\star$ maximises the expected return $J(\muv, \piv)$ given $\muv^\star$; while an MFSO $(\bar{\muv}^\star, \bar{\piv}^\star)$ maximises $J(\muv, \piv)$ among all $(\muv, \piv)$ satisfying $\muv = \Phi(\piv)$. 
In other words, an MFSO is a particular MFNE that maximises the expected return. Note that if an MFNE exists uniquely, it is also an MFSO. Unless specified otherwise, we will use the term MFNE to denote an equilibria that is not an MFSO.
The process of finding an MFSO can be defined as a constrained optimisation problem:
\begin{equation}\label{eq:MFSO}
	\max_{\muv, \piv} J(\muv, \piv) \;\;\;\text{subject to}\;\;\; \muv = \Phi(\piv).
\end{equation}
In Sec.~\ref{sec:revisit}, we draw a connection between this optimisation problem and a particular Markov decision process (MDP).

\section{Revisiting IRL for MFG via Reducing MFG to MDP}\label{sec:revisit}

Suppose we have no access to the reward function $r(s,a,\mu)$ but have a set of expert demonstrations. {\em Inverse reinforcement learning} (IRL) aims to uncover the reward function behind these demonstrations.
In this section, we revisit the IRL method in  \cite{yang2018learning}, which showed that an MFG can be reduced to an MDP. It thus extended IRL to MFGs by applying existing single-agent IRL methods to this MDP. This reduction is also adopted in the study of reinforcement learning (RL) for MFGs \cite{carmona2019model}. 
However, we show that this reduction holds only for the {\em fully cooperative setting}, i.e., all agents share the same societal reward. As a result, IRL methods based on this reduction would lead to biased reward inferences if demonstrations are sampled from an MFNE rather than an MFSO. The exposure of this limitation motivates our restudy on IRL for MFGs. In Sec.~\ref{sec:indi}, we propose a novel IRL method for MFGs with general non-cooperative environments. 

\subsection{The Reduction from MFG to MDP}

Shown in \cite{yang2018learning,carmona2019model}, an MFG can be reduced to an MDP that describes the population's collective behaviours. The state, action and reward in this MDP corresponds to the mean field, population's collective action and societal reward in the MFG, respectively. Its dynamics coincides with the MKV equation. 
Formally, the MDP associated with an MFG $(\Smc, \Amc, P, \mu_0, r, \gamma)$ is constructed as follows:
    \begin{itemize}
        \item State: $\mu_t$, i.e., the state at step $t$ is the mean field $\mu_t$.
        \item Action: $\pi_t$, i.e., the action is a per-step policy in MFG.
        \item Reward: $\bar{r}(\mu_t, \pi_t)$, i.e., the societal reward.
        \item Deterministic Transition: $\mu_{t+1} = \Phi(\mu_t, \pi_t)$ \footnote{Here, we slightly abuse the notation $\Phi$ to denote the next mean field induced by the current mean field and current per-step policy, according to the MKV equation.} fulfilling the MKV equation defined in Eq.~\eqref{eq:MKV}.        
        \item Stationary Policy: $\varphi: \mu \mapsto \pi$.
    \end{itemize}

Intuitively, one can interpret the MDP policy $\varphi$ as a ``central controller'' who makes decisions for the overall population based on the current population state distribution. 
Let $\varphi^{\star}$ be an optimal policy for the MDP, it was claimed in \cite{yang2018learning,carmona2019model} that the MDP state-action trajectory $(\mu_0, \pi_0, \ldots, \mu_T, \pi_T)$ generated by $\varphi^{\star}$ constitutes an MFNE of the MFG, where $\pi_t = \varphi^\star(\mu_t)$ and $\mu_{t+1} = \Phi(\mu_t, \pi_t)$.

However, we show that, more precisely, the MDP state-action trajectory generated by $\varphi^{\star}$ constitutes an MFSO rather than a general MFNE. The reason lies in the construction of the MDP: the optimal MDP policy $\varphi^{\star}$ maximises the cumulative societal rewards $J(\muv, \piv) = \sum_{t=0}^{T-1}\gamma^t \bar{r}(\mu_t,\pi_t)$ and meanwhile, the deterministic transition enforces the condition of population consistency, thereby exactly solving the constrained optimisation problem of computing an MFSO as defined in Eq.~\eqref{eq:MFSO}. Intuitively, at the macroscopic level, the dynamics of population's collective behaviours is governed by an MDP only if agents are fully cooperative. 
Therefore, if an MFG is not fully cooperative or we do not assume an MFNE exists uniquely, the reduction from MFG to MDP would no longer hold.

\subsection{Population-Level IRL for MFGs}
The work \cite{yang2018learning} proposed to infer the societal reward function $\bar{r}$ by applying single-agent IRL methods to the MDP defined above. Since this method infers the population's  societal reward, we henceforth call it {\em Population-Level IRL} (PLIRL).
In PLIRL, 
we assume that the demonstrations $\Dmc = \{ (\mu^j_0, \pi^j_0, \ldots, \mu^j_T, \pi^j_T) \}_{j=1}^M$ are a set of $M$ MF flow-policy trajectories whose expectation is $(\muv^E, \piv^E)$. The goal of PLIRL is to find a suitable $\bar{r}$ that can rationalise the expert behaviour $(\muv^E, \piv^E)$. We can succinctly represent PLIRL as the following constrained optimisation problem:
\begin{equation}\label{eq:PLIRL}
\begin{aligned}
	\mathrm{PLIRL} & \left(\muv^E, \piv^E \right) =  \argmax_{\bar{r}(\mu, \pi)} \left[ J\left(\muv^E, \piv^E\right) - \max_{\muv, \piv} J(\muv, \piv) \right] \\
	&\;\;\;\;\;\;\;\;\;\;\;\;\;\;\;\; \text{ subject to }\;\; \muv = \Phi(\piv)
\end{aligned},
\end{equation}
where $J(\muv, \piv)$ is in the form of Eq.~\eqref{eq:societal-rewards}.
Intuitively, if $(\muv^E, \piv^E)$ indeed maximises the societal rewards under a feasible $\bar{r}$, then the objective would attain the maximum $0$; otherwise, it is negative. In IRL literature, problems in the form of Eq.~\eqref{eq:PLIRL} are generally solved by a bilevel optimisation procedure \cite{ziebart2008maximum,finn2016guided,fu2018learning}. In PLIRL, specifically, the upper-level task is to tune the societal reward function $\bar{r}(\mu, \pi)$ given the optimal expected return of the MDP defined above; the lower-level task is to solve an MDP based on the current reward function. In practice, $J(\muv^E, \piv^E)$ is estimated from $\Dmc$. 

Although PLIRL allows us to model and predict the population's collective behaviours, it can only handle demonstrations sampled from an MFSO because the reduction from the MFG to the MDP holds only for the fully cooperative setting. The problem setting of PLIRL does not necessarily align with the interest of each individual agent in MFGs because, in general, agents exhibit non-cooperative interactions. The equilibrium behind demonstrations is thereby more likely to be an MFNE rather than an MFSO. Applying PLIRL on MFNE demonstrations would thus lead to biased reward inferences, as is illustrated in Fig~\ref{fig:bias-example}. It can also be observed from Eq.~\eqref{eq:PLIRL}, where $J(\muv^E, \piv^E)$ is generally not the maximised societal rewards under an MFNE. As a result, the policy elicited from a biased reward function may not coincide with any MFNE induced by the ground-truth reward function, leading to an unsuitable behavioural model.  

\begin{figure}
	\centering
	\includegraphics[width=.42\textwidth]{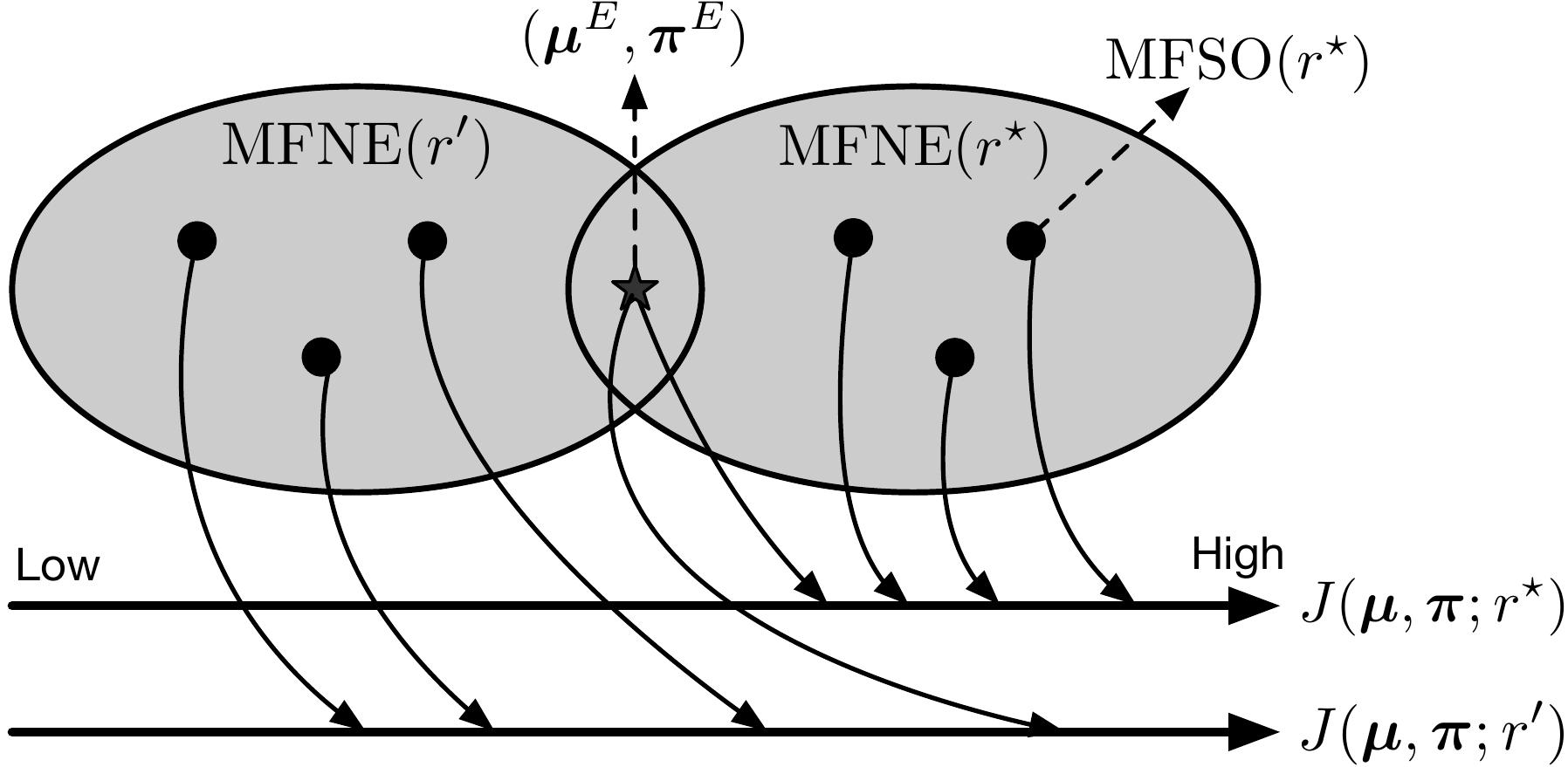}
	\caption{Illustration of the biased inference in population-level IRL. The shaded ellipse represents the set of all MFNE (including MFSO) induced by a specific reward function. The expert uses an MFNE $(\muv^E, \piv^E)$ under the ground-truth reward function $r^\star$, which is an MFSO under another reward function $r'$. Since PLIRL presupposes expert demonstrations are sampled from an MFSO, it tends to infer the societal reward induced by $r'$ rather than that induced by $r^\star$.}\label{fig:bias-example}
\end{figure}

\section{Individual-Level IRL for MFG\titlelowercase{s}}\label{sec:indi}




The discussions above justify the necessity to recover the individual reward function $r(s,a,\mu)$ from demonstrations sampled from an MFNE because the individual reward is independent of the environment and thereby allows us to model and predict both cooperative and non-cooperative behaviours in large populations. In this section, we first formalise this problem as the {\em Individual-Level IRL} (ILIRL) for MFGs, as opposed to population-level IRL. We then propose our solution framework for ILIRL with theoretical guarantees. 

\subsection{Problem Formulation}

Suppose we do not know the ground-truth reward function $r(s,a,\mu)$ but have a set of expert demonstrations $\Dmc =  \{ \tau^j \}_{j = 1}^M$ sampled from  an {\em unknown} MFNE $(\muv^E, \piv^E)$. Each $\tau^j = s^j_0,a^j_0,
\ldots, s^j_T,a^j_T$ is a state-action trajectory of an {\em individual} agent, which is sampled via $s_0 \sim \mu^E_0$, $s_t \sim P(\cdot \vert s_{t-1}, a_{t-1}, \mu^E_{t-1})$ \footnote{
Since under an MFNE, the mean field matches each individual's state visitation distribution, sampling the state from a single individual via $s_t \sim P(\cdot \vert s_{t-1}, a_{t-1}, \mu^E_{t-1})$ is equivalent to sampling that from multiple individuals, i.e., $s_t \sim \mu^E_t$.} and $a_t \sim \pi_t^E(\cdot \vert s_t)$. Following the convention in IRL literature \cite{ho2016generative,song2018multi,yu2019multi}, we assume that $\Dmc$ provides the entire supervision signals, i.e., we cannot further communicate with the expert for additional information. ILIRL for MFG asks for a reward function $r(s,a,\mu)$, under which $(\muv^E, \piv^E)$ forms an MFNE.

The individual-level inference characteristics of ILIRL are embodied in two key aspects, which distinguish it from PLIRL. First, ILIRL uses demonstrated state-action trajectories sampled from individuals. In contrast, PLIRL uses demonstrated MF flows and policies sampled from the population. Second, ILIRL aims to infer the individual reward $r(s,a,\mu)$ and we can calculate societal rewards accordingly; while PLIRL aims to infer the societal reward $\bar{r}(\mu, \pi)$ that is uninformative for acquiring individual rewards. 

To frame ILIRL as an optimisation problem, we desire an inverse operator $\mathrm{ILIRL}(\muv^E, \piv^E)$ in analogy to $\mathrm{PLIRL}$ as defined in Eq.~\eqref{eq:PLIRL}. The key idea of Eq.~\eqref{eq:PLIRL} is to choose a societal reward $\bar{r}(\mu, \pi)$ that creates a {\em margin} between the expert and every other MF flow-policy pair. Since in an MFNE the policy maximises the expected return given the MF flow, we can interpret $\mathrm{ILIRL}$ as finding a reward $r(s,a,\mu)$ that creates a margin between the expert policy $\piv^E$ and every other policy given the expert MF flow $\muv^E$:
\begin{equation}\label{eq:ILIRL}
\begin{aligned}
	\mathrm{ILIRL}&\left(\muv^E, \piv^E\right) = \argmax_{r(s,a,\mu)}  \left[ J\left(\muv^E, \piv^E \right) - \max_{\piv} J\left(\muv^E, \piv\right) \right]\\
\end{aligned},
\end{equation}
where $J(\muv, \piv)$ is in the form of Eq.~\eqref{eq:exp-return}. If $(\muv^E, \piv^E)$ is an MFNE under a valid $r(s,a,u)$, then the objective attains the maximum 0, otherwise it is negative.  Note that there may exist multiple feasible solutions to the problem. We do not intend to find all of them, as any a feasible reward function can explain expert demonstrations.


\subsection{The Mean Field IRL Framework}\label{sec:MFIRL}
We next present our proposed framework to solve the optimisation problem of ILIRL in Eq.\eqref{eq:ILIRL}, which we name as {\em Mean Field IRL} (MFIRL). The framework solves the ILIRL problem in a manner of bilevel optimisation, where the upper-level task is to tune the reward function $r(s,a,\mu)$ given the solution of the lower-level task that computes a best-response policy to the expert MF flow $\muv^E$. Recall the fixed point iteration for computing MFNE from Sec.~\ref{sec:MFNE}, where the mapping $\hat{\Psi}: \muv \mapsto \piv$ is used to identify a best-response policy to $\muv$ under a reward function $r$. Using this notation, we are left with finding a suitable $\hat{\Psi}$. But first, let us see how to solve the ILIRL problem in a more feasible way, where we estimate expected returns from $\Dmc$ and use a parameterised reward function. 
Immediately after that, the instantiation of $\hat{\Psi}$ will be given.


\begin{figure}
	\centering
	\includegraphics[width= .4\textwidth]{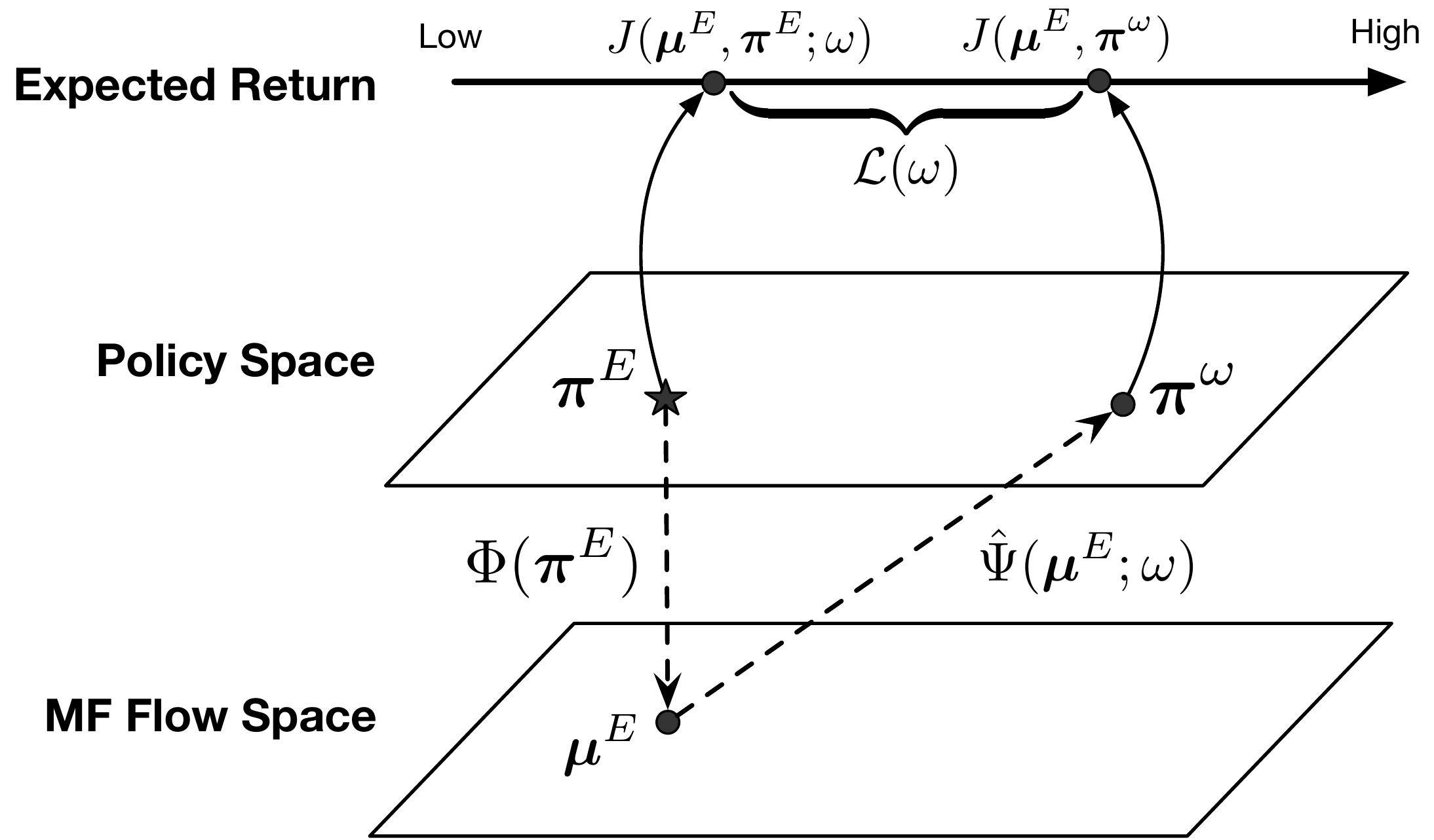}
	\caption{Illustration of the MFIRL framework. 
	The function $\hat{\Psi}(\muv^E;\omega)$ identifies a best-response policy $\piv^\omega$ to $\muv^E$ under a reward function $r_\omega$. The function $\Lmc(\omega)$ measures the difference between expected returns induced by $\piv^E$ and $\piv^\omega$ given $\muv^E$. If we can find a parameter $\omega^\star$ such that $\Lmc(\omega^\star) = 0$, then $r_\omega$ is a valid solution to ILIRL. In practice, $\muv^E$ is replaced by the empirical value $\hat{\muv}^E$ estimated from demonstrations $\Dmc$.}\label{fig:MFIRL}
\end{figure}




\subsubsection{Estimated Expected Returns and Parameterised Reward Functions.}
Since the true MF flow $\muv^E$ is unknown, we use an empirical value $\hat{\muv}^E \triangleq \{ \hat{\mu}^E_t \}_{t=0}^{T}$ estimated from $\Dmc$ by averaging the frequencies of state occurrences:
\begin{equation}\label{eq:est_mf}
\hat{\mu}^E_t(s) = \frac{1}{M} \sum_{j=1}^M  \mathds{s} \mathds{1}_{ \{ s^j_t = s \} }.
\end{equation} 
Since under an MFNE, the mean field matches each individual's state visitation distribution, $\hat{\muv}^E$ is an unbiased estimator of $\muv^E$.

Following standard practice \cite{finn2016guided,fu2018learning,yu2019multi}, we assume  the reward function $r$ is parameterised by $\omega\in \Rbb^d$ and thus write it as $r_\omega$. 
Also assume $(\muv^E, \piv^E)$ is induced by some {\em unknown} true parameter $\omega^\star$, i.e., $(\muv^E, \piv^E) = (\muv^{\omega^\star}, \piv^{\omega^\star})$. Let $\piv^\omega = \hat{\Psi}(\hat{\muv}^E; \omega)$ denote a best-response policy to $\hat{\muv}^E$ under $r_\omega$. As illustrated in Fig.~\ref{fig:MFIRL}, optimising the problem in Eq.~\eqref{eq:ILIRL} on demonstration data $\Dmc$ reduces to a search process for $\omega^\star$, where $J(\muv^E, \piv^E; \omega)$ is estimated from $\Dmc$:
\begin{equation}\label{eq:MFIRL-omega}
\begin{aligned}
	\max_{\omega} \Lmc(\omega) \triangleq  \Ebb_{\tau \sim \Dmc} \left[ \sum_{t=0}^{T-1} \gamma^t r_\omega(s_t,a_t,\hat{\mu}^E_t) \right] - J(\hat{\muv}^E, \piv^\omega).
\end{aligned}
\end{equation}

\subsubsection{Characterising Best-Response Policies}

The desired mapping $\hat{\Psi}$ relies on the {\em action value function} for MFGs, which represents the expected cumulative future rewards guided by an MF flow $\muv$ and a policy $\piv$. Formally, it is defined by
\begin{equation}\label{eq:Q}
Q(t,s,a, \muv) \triangleq r(s,a,\mu_t) + \mathbb{E} \left[ \sum_{\ell = t + 1}^T \gamma^{\ell - t} r(s_\ell, a_\ell, \mu_\ell) \Big\vert \muv, \piv, P \right].	
\end{equation}
The expected return can be expressed in terms of $Q$ as follows:
	$J(\muv, \piv) = \Ebb_{s \sim \mu_0} \left[ \sum_{a \in \Amc} \pi_0(a \vert s)\; Q(0,s,a,\muv) \right]$.
We can use {\em backward induction} to recursively compute the action value function: starting from the terminal step $t = T$: $Q(T, s, a, \muv) = r(s, a, \mu_T)$; then for $0 \leq t < T$ we recursively compute:
\begin{equation}\label{eq:Q_recursive}
		 Q(t, s, a,\muv)  =  r(s, a, \mu_t) + \gamma \mathbb{E} \big[ Q(t+1, s', a',\muv) \vert \muv, \piv, P \big].
\end{equation}

For a fixed MF flow $\muv$, we say a policy $\piv^*$ is {\em greedy} with respect to $\muv$ if $\pi^*_t(\cdot \vert s)$ picks an action 
\begin{equation}\label{eq:maxP}
	a_t \in \mathop{\argmax}_{a\in \Amc} Q(t, s, a, \muv)
\end{equation} 
{\em uniformly at random}.
Since $\piv^*$ maximises the action value function for each step, it is a best response to the corresponding MF flow $\muv$. This provides an intuition to define the mapping $\piv^\omega = \hat{\Psi}(\hat{\muv}^E; \omega)$ by letting $\piv^\omega$ be greedy with respect to $\muv^E$. More formally, we write $Q_\omega^* (t,s,a,\hat{\muv}^E)$ for the optimal action value function induced by a greedy policy given $r_\omega$. By Eq.~\eqref{eq:Q_recursive} and Eq.~\eqref{eq:maxP},  $\piv^\omega$ and $Q_\omega^*$ can be recursively computed by backward induction. The expected return induced by $\piv^\omega$ in Eq.~\eqref{eq:MFIRL-omega} can thereby be written as:
\begin{equation}\label{eq:J-omega}
	J(\hat{\muv}^E, \piv^\omega) = \Ebb_{s \sim \hat{\mu}^E_0} \left[ \sum_{a \in \Amc} \pi^\omega_0 (a \vert s)\; Q^*_\omega(0,s,a,\hat{\muv}^E) \right].
\end{equation}

\subsection{Theoretical Result}
Now, we are ready to present our main result, which shows that the optimal solution to the optimisation problem in Eq.~\eqref{eq:MFIRL-omega} is an asymptotically consistent estimator of the true reward parameter.

\begin{theorem}\label{thm:MFIRL}
Let the demonstrated trajectories in $\Dmc = \{ \tau^j \}_{j = 1}^M$ be independent and identically distributed (i.i.d.) and sampled from an MFNE induced by an unknown parameterised reward function $r_{\omega^\star}(s,a,\mu)$. With probability $1$ as the number of samples $M \to \infty$, the equation $\Lmc(\omega) = 0$ has a root $\hat{\omega}$ such that $\hat{\omega} = \omega^\star$.  
\end{theorem}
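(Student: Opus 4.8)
The plan is to exhibit a deterministic limiting objective $\mathcal{L}_\infty$ to which $\mathcal{L}$ converges almost surely as $M\to\infty$, and to show that $\mathcal{L}_\infty\le 0$ everywhere with $\mathcal{L}_\infty(\omega^\star)=0$; then $\omega^\star$ is, in the limit, a root of $\mathcal{L}(\omega)=0$. Write $J(\muv,\piv;\omega)$ for the expected return of Eq.~\eqref{eq:exp-return} with $r$ replaced by $r_\omega$, and set
\[
  \mathcal{L}_\infty(\omega) \;\triangleq\; J(\muv^E,\piv^E;\omega)\;-\;\max_{\piv} J(\muv^E,\piv;\omega).
\]

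\emph{Step 1 (the limit is non-positive and tight at $\omega^\star$).} For every $\omega$ the inequality $\mathcal{L}_\infty(\omega)\le 0$ is immediate because $\piv^E$ is one of the policies over which the maximum is taken. For equality at $\omega^\star$, the hypothesis that $(\muv^E,\piv^E)$ is an MFNE under $r_{\omega^\star}$ gives agent rationality, $\piv^E\in\Psi(\muv^E)$ under $r_{\omega^\star}$; hence $\piv^E$ attains $\max_{\piv}J(\muv^E,\piv;\omega^\star)$ and the two terms cancel, so $\mathcal{L}_\infty(\omega^\star)=0$.

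\emph{Step 2 (almost-sure convergence $\mathcal{L}(\omega)\to\mathcal{L}_\infty(\omega)$, pointwise in $\omega$).} This splits into the two terms of Eq.~\eqref{eq:MFIRL-omega}. For the first term, note that under an MFNE the mean field matches the individual state-visitation distribution, so $\hat{\muv}^E$ is unbiased for $\muv^E$ and, by the strong law of large numbers applied coordinate-wise, $\hat{\mu}^E_t(s)\to\mu^E_t(s)$ a.s.\ for all $t,s$. Using (uniform, since $\Smc\times\Amc$ is finite) continuity of $r_\omega(\cdot,\cdot,\mu)$ in $\mu$, the empirical average $\tfrac1M\sum_j\sum_t\gamma^t r_\omega(s^j_t,a^j_t,\hat{\mu}^E_t)$ differs from $\tfrac1M\sum_j\sum_t\gamma^t r_\omega(s^j_t,a^j_t,\mu^E_t)$ by at most $\sum_t\gamma^t$ times a modulus of continuity evaluated at $\|\hat{\mu}^E_t-\mu^E_t\|\to 0$, while the latter average converges a.s.\ to $J(\muv^E,\piv^E;\omega)$ by the strong law over the i.i.d.\ trajectories (the per-trajectory summand is bounded). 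For the second term, $J(\hat{\muv}^E,\piv^\omega)=\max_{\piv}J(\hat{\muv}^E,\piv;\omega)$ since $\piv^\omega=\hat{\Psi}(\hat{\muv}^E;\omega)$ is greedy with respect to $Q^*_\omega$ and hence a best response to $\hat{\muv}^E$ under $r_\omega$; by Eq.~\eqref{eq:J-omega} and greediness this equals $\Ebb_{s\sim\hat{\mu}^E_0}[\max_a Q^*_\omega(0,s,a,\hat{\muv}^E)]$. Unfolding the backward recursion Eq.~\eqref{eq:Q_recursive} shows that $\muv\mapsto Q^*_\omega(t,s,a,\muv)$ is continuous on the simplex product $\Pmc(\Smc)^{T+1}$ (using the standard continuity of $r_\omega$ and of $P$ in $\mu$), so this expression is continuous in $\muv$ and converges a.s.\ to $\max_{\piv}J(\muv^E,\piv;\omega)$ as $\hat{\muv}^E\to\muv^E$. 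Combining the two limits yields $\mathcal{L}(\omega)\to\mathcal{L}_\infty(\omega)$ a.s.

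\emph{Step 3 (conclusion).} Taking $\omega=\omega^\star$ and invoking Step 1, $\mathcal{L}(\omega^\star)\to\mathcal{L}_\infty(\omega^\star)=0$ almost surely; since $\mathcal{L}_\infty\le 0$, $\omega^\star$ is a global maximiser of the limiting objective, so $\hat{\omega}\triangleq\omega^\star$ solves $\mathcal{L}(\omega)=0$ in the limit, as claimed. I expect the main obstacle to be the second half of Step 2: because $\piv^\omega$ is obtained from an $\argmax$, it is not continuous in the mean field, so the argument cannot be made at the level of policies and must instead be run at the level of the optimal value function, exploiting that $\sum_a\pi^\omega_0(a\vert s)\,Q^*_\omega(0,s,a,\muv)=\max_a Q^*_\omega(0,s,a,\muv)$ is continuous even though $\piv^\omega$ is not. (Strengthening ``has a root equal to $\omega^\star$'' to ``every root equals $\omega^\star$'' would require an extra identifiability condition on the parameterisation, which the stated theorem does not assume.)
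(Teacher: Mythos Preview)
Your proof is correct and follows essentially the same approach as the paper's: both argue via the strong law of large numbers that $\hat{\muv}^E \to \muv^E$ almost surely, whence the limiting objective vanishes at $\omega^\star$ by agent rationality of the demonstrated MFNE. The paper compresses this into three sentences and does not spell out the continuity argument you give in the second half of Step~2 (passing through $\max_a Q^*_\omega$ rather than the discontinuous $\piv^\omega$), so your version is the more careful of the two.
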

\begin{proof}
	Under the i.i.d. assumption, $\hat{\muv}^E = \muv^E$ with probability $1$ as $M \to \infty$, due to the law of large numbers. Having this, we further know that $\Lmc(\hat{\omega}) = 0$ if and only if 
	$(\muv^E, \piv^E)$ is an MFNE under reward $r_{\hat{\omega}}$. Finally, due to the fact $(\muv^E, \piv^E)$ is an MFNE induced by $r_{\omega^\star}$, there must exist one $\hat{\omega}$ such that $\hat{\omega} = \omega^\star$.
\end{proof}
\section{Practical MFIRL Algorithm}

This section develops a practical implementation of the MFIRL framework. For the sake of practical use, we consider gradient methods to optimise the objective $\Lmc(\omega)$ in Eq.~\eqref{eq:MFIRL-omega}, where the gradient $\nabla \Lmc$ is computed by 
\begin{equation}\label{eq:nablaL}
\begin{aligned}
    \nabla \Lmc & = \Ebb_{\tau \sim \Dmc} \left[ \sum_{t=0}^{T-1} \gamma^t \nabla r_\omega(s_t,a_t,\hat{\mu}^E_t) \right] - \nabla J(\hat{\muv}^E, \piv^\omega).
\end{aligned}
\end{equation}
By Eq.~\eqref{eq:J-omega}, the gradient $\nabla J(\hat{\muv}^E, \piv^\omega)$ is computed by:
\begin{equation}\label{eq:nabla-J-omega}
\begin{aligned}
	\nabla J(\hat{\muv}^E, \piv^\omega) = \Ebb_{s \sim \hat{\mu}^E_0} \Bigg[ \sum_{a \in \Amc} & Q^*_\omega(0,s,a,\hat{\muv}^E) \nabla \pi^\omega_0(a\vert s) \\
	& + \pi^\omega_0(a\vert s) \nabla Q^*_\omega(0,s,a,\hat{\muv}^E) \Bigg].
\end{aligned}
\end{equation}
While, one difficulty with such an approach is that the greedy policy $\piv^\omega$ is non-differentiable \cite{guo2019learning}. We thus need to find alternative smooth $\hat{\Psi}$ mapping. To this end, we adopt Boltzmann policy \footnote{Other smooth operators (e.g., Mellowmax \cite{asadi2017alternative}) may also be used to approximate $\hat{\Psi}$.} 
to approximate the non-differentiable greedy policy. Formally, we use $\tilde{\piv}^\omega$ to denote a Boltzmann policy and use $\tilde{Q}^*_\omega$ to represent the corresponding action value function, which are  defined by:
\begin{equation}\label{eq:policy_BOL}
	\tilde{\pi}^\omega_t (a \vert s)  \triangleq \frac{\exp \big( \beta \tilde{Q}^*_\omega (t,s,a,\hat{\muv}^E)}{\sum_{a' \in \Amc} \exp \big( \beta \tilde{Q}^*_\omega (t, s, a', \hat{\muv}^E \big)}.
\end{equation}
Here, $\beta > 0$ is the {\em inverse Boltzmann temperature} controlling the degree of approximation. Note that we recover the optimality (i.e., the greedy policy) if $\beta \to \infty$. By Eq.~\eqref{eq:policy_BOL} and Eq.~(\ref{eq:Q_recursive}), gradients $\nabla \tilde{\pi}^\omega_t (a \vert s)$ and $\nabla \tilde{Q}_{\omega}^*(t,s,a,\hat{\muv}^E)$ can be recursively calculated:

\begin{equation}\label{eq:gradient_policy}
        \begin{aligned}
          \nabla \tilde{\pi}^\omega_t (a \vert s) 
         =&~ \tilde{\pi}^\omega_t (a \vert s) \cdot \beta \bigg[\nabla \tilde{Q}_{\omega}^*(t,s,a,\hat{\muv}^E) \\
          &\;\;\;\;\;\;\;\;\;\;\;\;\;\;- \Ebb_{a' \sim \tilde{\pi}^\omega_t (\cdot \vert s)} \left[ \nabla \tilde{Q}_{\omega}^*(t,s,a',\hat{\muv}^E) \right] \bigg],
         \end{aligned}
\end{equation}

\begin{equation}\label{eq:gradient_Q}
 \begin{aligned}
 	 \nabla \tilde{Q}_{\omega}^*&(t,s,a,\hat{\muv}^E) = \nabla r_{\omega}(s,a,\hat{\mu}^E_t)\\
 	& + \gamma \Ebb_{s' \sim P} \Bigg[ \sum_{a' \in \Amc}  \tilde{Q}_{\omega}^*(t + 1,s',a',\hat{\muv}^E) \nabla \tilde{\pi}^\omega_{t+1} (a' \vert s') \\
	& \;\;\;\;\;\;\;\;\;\;\;\;\;\; + \Ebb_{a' \sim \tilde{\pi}^\omega_{t+1} (\cdot \vert s')} \left[ \nabla \tilde{Q}_{\omega}^*(t+1,s',a',\hat{\muv}^E) \right]  \Bigg].
	\end{aligned}
\end{equation}


Detailed derivation of $\nabla \tilde{\pi}^\omega_t (a \vert s)$ in Eq.~\eqref{eq:gradient_policy} is given in Appendix~A. 
Substituting $\nabla \tilde{Q}_{\omega}^*$ for $\nabla Q_{\omega}^*$ in Eq.~\eqref{eq:nabla-J-omega} yields an approximate $\nabla \Lmc$, which we denote by $\nabla \tilde{\Lmc}$.

\begin{algorithm}
   \caption{Practical MFIRL Algorithm}\label{alg:AL-MFIRL}
\begin{algorithmic}[1]
   \STATE {\bf Input:} MFG with parameters $(\Smc, \Amc, P, \mu_0, \gamma)$ and expert demonstrations $\Dmc = \{ \tau^j \}_{j = 1}^M$.
   \STATE {\bf Initialisation:} Initialise reward parameter $\omega$.
   \STATE Estimate empirical expert MF flow $\hat{\muv}^E$ according to Eq.~\eqref{eq:est_mf}.
   \FOR{each epoch}
   		\FOR{$t = T, \ldots, 0$}
   		\STATE Calculate $\nabla \tilde{\pi}^\omega_t$ and $\nabla \tilde{Q}_{\omega}^*$ according to Eq.~\eqref{eq:gradient_policy} and Eq.~\eqref{eq:gradient_Q}. 
   		\ENDFOR  
   		\STATE Calculate the empirical gradient $\hat{\nabla} \tilde{\Lmc}$ according to Eq.~\eqref{eq:nablaL}.
   		\STATE  Update $\omega$ to increase $\Lmc$ according to  $\hat{\nabla} \tilde{\Lmc}$.
   \ENDFOR
   \STATE {\bfseries Output:} Learned reward function $r_{\omega}$.
\end{algorithmic}
\end{algorithm}

\subsubsection*{Monte-Carlo Simulation under Unknown Dynamics}
The transition function $P$ is generally unknown in practice. We can instead use Monte-Carlo simulation to estimate empirical gradients $\hat{\nabla} \tilde{Q}_{\omega}^*$ and $\hat{\nabla} \tilde{\pi}^\omega_t$, that is, suppose we have the access to a simulator of the environment and calculate $\hat{\nabla} \tilde{Q}_{\omega}^*$ by estimating the expectation with respect to $s'$ in Eq.~\eqref{eq:gradient_Q} using sampled states $s'$ from the simulator. We denote the resulting empirical gradient of $\Lmc$ by $\hat{\nabla}\tilde{\Lmc}$.

\begin{figure*}[!htp]
	\centering
	\includegraphics[width=\textwidth]{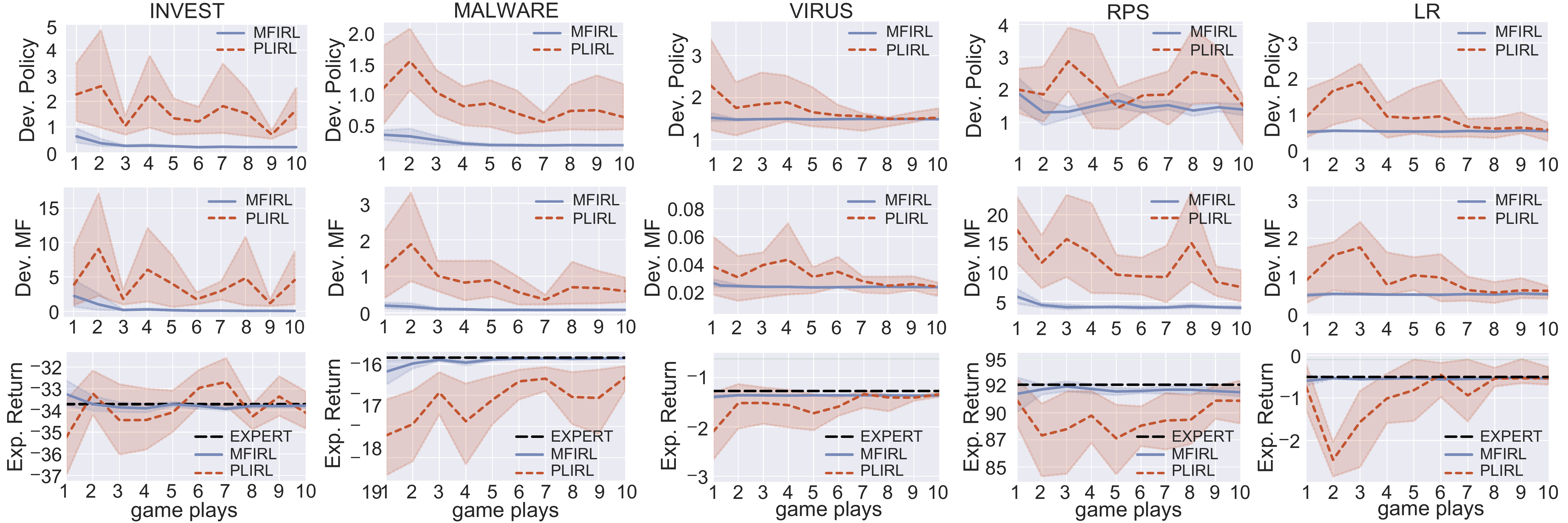}
	\caption{Results for numerical tasks under original dynamics. The solid line shows the median and the shaded area represents the standard deviation over 10 independent runs. }\label{fig:original}
\end{figure*}

\subsubsection*{Truncated Recursive Computation of Gradients}
With the horizon of an MFG increasing, the recursive computation of $\nabla \tilde{Q}^*_\omega(t,s,a,\hat{\muv}^E)$ tends to be intractable. To overcome this potential issue in practice, we can approximate $\nabla \tilde{Q}^*_\omega(t,s,a,\hat{\muv}^E)$ by truncating the recursion. More formally, let $\omega^{(i)}$ denote the reward parameter in the $i$th round of update. In the $i+1$th update, we look $H$-step ($H<T$) ahead according to Eq.~\eqref{eq:gradient_Q} until we arrive at time step $t+H$, where we stop tracing deeper by replacing $\nabla \tilde{Q}^*_\omega(t+H+1,s,a,\hat{\muv}^E) \vert_{\omega = \omega^{(i)}}$ and $\nabla \tilde{\pi}^\omega_{t+H+1} (a' \vert s')\vert_{\omega = \omega^{(i)}}$ with the corresponding variables induced by the old value $\omega^{(i-1)}$. Using old parameter values to approximate the corresponding variables induced by new values has also been adopted in the RL literature \cite{schulman2015trust,schulman2017proximal}. The intuition is that the change of the parameter between two updates is not severe. Meanwhile, due to the presence of the discount factor, this approximation would not cause a significant error in the estimation of gradients. 

To summarise, we present the pseudocode in Alg.~\ref{alg:AL-MFIRL}.


\section{Experiments}\label{sec:experiments}

In experiments, we seek to answer the following key question: {\em Can MFIRL efficiently recover the underlying individual reward function, regardless of whether the environment is cooperative or not?}  To this end, 
we evaluate the quality of a learned reward function $r_\omega$ by comparing its induced MFSO $(\bar{\muv}^\omega, \bar{\piv}^\omega)$ and the ground-truth MFSO $(\bar{\muv}^\star, \bar{\piv}^\star)$, because an MFSO is more likely to be unique than an MFNE in terms of policy and it is always unique in term of expect return. Specifically, we use the following metrics:
\begin{enumerate}
	\item {\em Policy Deviation} (Dev. Policy). We use the cumulative KL-divergence,  $\sum_{t = 0}^{T} \sum_{s \in \Smc} D_{\KL} \big( \bar{\pi}_t^\star(\cdot \vert s) \parallel \bar{\pi}_t^\omega(\cdot \vert s) \big)$, to measure the difference over two policies.
    \item {\em MF flow Deviation} (Dev. MF). Similarly, we use the cumulative KL-divergence, $\sum_{t = 0}^{T} D_{\KL} \big(  \bar{\mu}_t^\star \parallel \bar{\mu}_t^\omega \big)$, to measure the difference over two MF flows.
    \item {\em Expected return.} (Exp. Return) The expected return of $(\bar{\muv}^\omega, \bar{\piv}^\omega)$ and $(\bar{\muv}^\star, \bar{\piv}^\star)$ under the ground-truth reward function. 
\end{enumerate} 

We compare MFIRL against the PLIRL in \cite{yang2018learning} as it is the only IRL method for MFGs in the literature as of the present. Also, existing multi-agent IRL and imitation learning methods (e.g., MA-GAIL \cite{song2018multi} and MA-AIRL \cite{yu2019multi}) scale poorly when the population size is in hundreds, due to the exponential growth of joint state-action spaces. 
We carry out two classes of tests, one on numerical MFG models 
and the other on simulated mixed cooperative-competitive battle games.
For both classes of tests, we set the discounted factor $\gamma=0.99$ and the time horizon as $50$, which is the same as the number of time steps used in \cite{song2018multi,yu2019multi}. For MFIRL, we set the inverse Boltzmann temperature $\beta = 1$.
In each task, we set $N$ agents where $N$ is a large but finite number. We sample state-action trajectories from each of $N$ individual agents by executing a pre-trained expert policy. We call an execution of the expert policy on all $N$ agents a {\em game play}. MFIRL directly takes as input these individual trajectories. While, each demonstrated expert policy and mean field fed into PLIRL are estimated by averaging occurrence frequencies of states and actions in all $N$ trajectories per game play.
For both algorithms, we adopt the same neural network architecture as the reward model: two hidden layers of $64$ leaky rectified linear units ($\mathtt{ReLU}$) each. 
Implementation details are given in Appendix~C. 

\subsection{Numerical Models}

\subsubsection*{Settings.} We evaluate MFIRL on five numerical discrete MFG models: {\em investment in  product quality} \cite{weintraub2010computational,subramanian2019reinforcement} ({\tt INVEST} for short), {\em malware spread} \cite{huang2016mean,huang2017mean,subramanian2019reinforcement} ({\tt MALWARE}), {\em virus infection} \cite{cui2021approximately} ({\tt VIRUS}), {\em Rock-Paper-Scissors} \cite{cui2021approximately} ({\tt RPS}) and {\em Left-Right} \cite{cui2021approximately} ({\tt LR}), ordered in decreasing complexity. These models simulate a series of large-scale decision making scenarios in the real world. Among these five models, we use {\tt VIRUS} and {\tt LR} as cooperative scenarios, i.e., the demonstrations in these two models are sampled from MFSO. The remaining three models are used as non-cooperative scenarios. Statistical information of these models is summarised in Tab.~\ref{tab:numerical}. Detailed descriptions and settings can be found in Appendix~B. We take $100$ agents for each model.
We train MFNE experts through the fixed-point iteration as described in Sec.~\ref{sec:MFNE}, and train MFSO experts using DDPG \cite{lillicrap2015continuous} to solve the MDP reduced from MFG. 

\begin{table}
\caption{Statistics of Numerical MFG Models}\label{tab:numerical}
\centering
	\begin{tabular}{l c c c c}
		\toprule
		Model & States & Actions &  Cooperative \\
		\midrule
		{\tt INVEST} & 10 & 2 & \xmark \\
		{\tt MALWARE} & 10 & 2 &\xmark\\
		{\tt VIRUS} & 2 & 2 & \cmark\\
		{\tt RPS} & 3 & 3 &  \xmark\\
		{\tt LR} & 3 & 2 & \cmark\\ 
		\bottomrule
	\end{tabular}
\end{table}

\begin{table*}[!h]
\small
    \caption{Results for numerical models under new dynamics. Mean and variance are taken across 10 independent runs.}
    \label{tab:new}
    \centering 
    \begin{tabular}{c  c  c  c  c  c  c}
        \toprule
        \multirow{3}{*}{Metric} & \multirow{3}{*}{Algorithm} & \multicolumn{5}{c}{Task}\\
        \cmidrule(r){3-7}
        & & {\tt INVEST} & {\tt MALWARE} & {\tt VIRUS} & {\tt RPS} & {\tt LR}\\
        \midrule
        \multirow{2}{*}{Dev. Policy} & MFIRL &  \textbf{0.305} $\pm$ 0.017 & \textbf{0.411} $\pm$ 0.025 & \textbf{1.544} $\pm$ 0.012 &  \textbf{7.089} $\pm$ 0.541 & \textbf{0.683} $\pm$ 0.035\\
        & PLIRL &  1.130 $\pm$ 0.334 & 1.466 $\pm$ 1.322 & 1.892 $\pm$ 0.237 & 7.550 $\pm$ 0.841 & 0.734 $\pm$ 0.373 \\
        \midrule
        \multirow{2}{*}{Dev. MF} & MFIRL & \textbf{0.464} $\pm$ 0.029 & \textbf{0.435} $\pm$ 0.007 & \textbf{0.057} $\pm$ 0.0004 &  \textbf{2.932} $\pm$ 0.057 & 0.353 $\pm$ 0.029\\ 
         & PLIRL  & 1.510 $\pm$ 0.697 & 1.731 $\pm$ 2.207 & 0.076 $\pm$ 0.0085 &  3.112 $\pm$ 0.569 & \textbf{0.348} $\pm$ 0.310 \\
        \midrule
        \multirow{3}{*}{Expected Return}
        & EXPERT & -35.051 & -18.055 & -1.167 & 94.274 & -0.518\\
        & MFIRL & \textbf{-35.542} $\pm$ 0.677 & \textbf{-18.519} $\pm$ 0.245 & -1.614 $\pm$ 0.042 & \textbf{93.578} $\pm$ 2.508 & -0.563 $\pm$ 0.078 \\
        & PLIRL & -35.917 $\pm$ 2.548 & -19.151 $\pm$ 0.507 & \textbf{-1.553} $\pm$ 0.170 & 93.212 $\pm$ 0.493 & \textbf{-0.547} $\pm$ 1.080 \\
        \bottomrule
    \end{tabular}
\end{table*}

\begin{figure*}[!h]
    \centering
    \includegraphics[width =.77\textwidth]{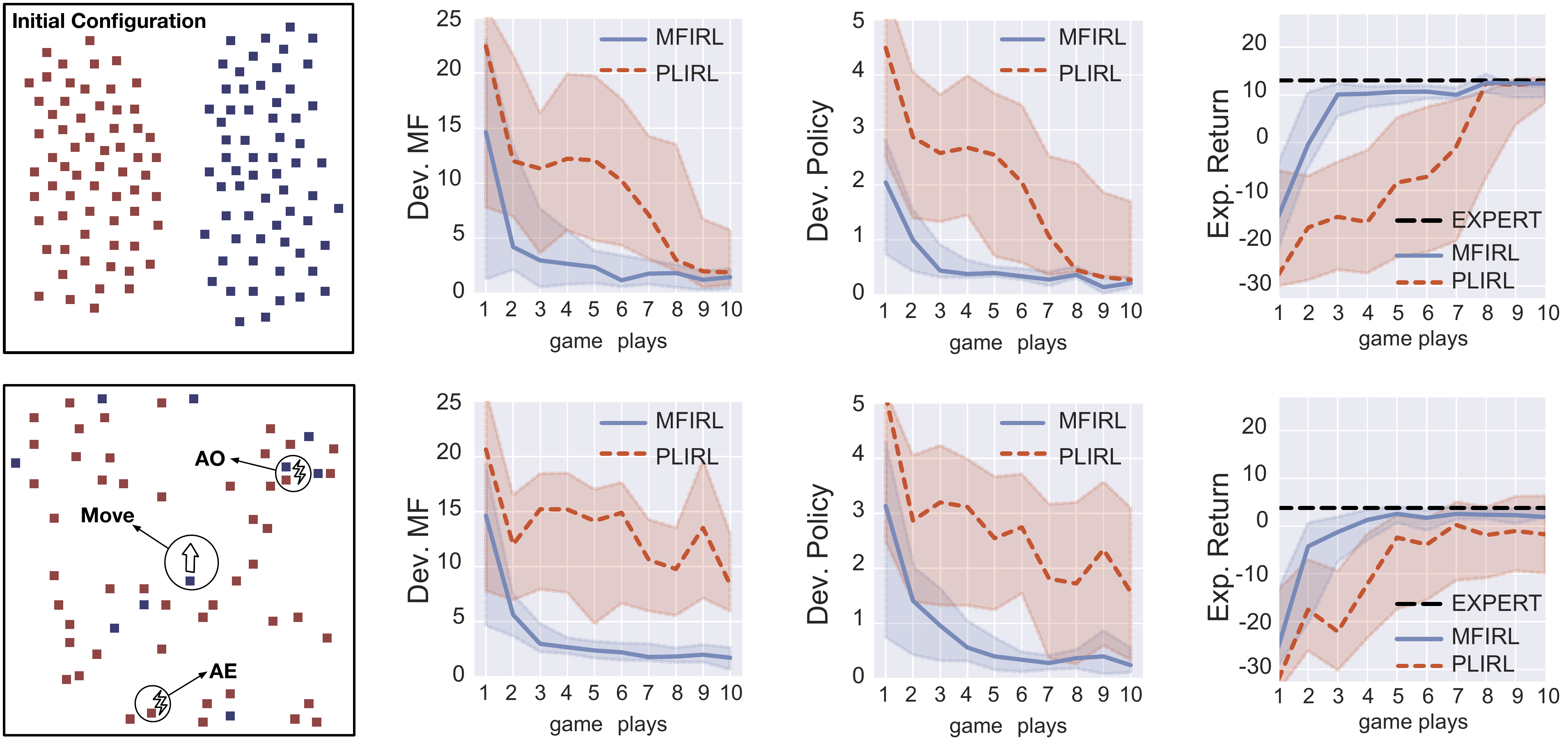}
    \caption{Illustration and results for simulated battle games. The first and second rows correspond to the cooperative and mixed cooperative-competitive setting, respectively. Results are averaged over 10 independent runs.}
    \label{fig:game}
\end{figure*}

\subsubsection*{Reward Recovery under Original Dynamics} 
We first conduct tests under fixed environment dynamics. Fig.~\ref{fig:original} depicts results. On all non-cooperative models, MFIRL achieves near-expert performance, while PLIRL shows larger deviations. The reason is that the demonstrations in these models are sampled from an MFNE rather than MFSO. This verifies the biased reward inference in PLIRL. On cooperative models, both MFIRL and PLIRL show expert-like performance, but MFIRL is more sample efficient. This is because one game play provides $N$ samples to our MFIRL but can only provide one sample of MF flow and policy to PLIRL. 


\subsubsection*{Robustness to New Dynamics.}
To investigate the robustness of the learned reward function $r_\omega$ in the face of uncertainties in dynamics, we change the transition function (see Appendix~B for details), recompute MFSO induced by the ground-truth reward and $r_\omega$ (trained with 10 game plays), respectively. Results are summarised in Tab.~\ref{tab:new}. MFIRL has comparable performance with PLIRL on cooperative models but shows much smaller errors on non-cooperative models. We attribute the low robustness of PLIRL to the conjecture that the changing dynamics can exacerbate the biased reward inference. 

\subsection{Simulated Battle Games}

\subsubsection*{Settings} The {\em Mixed Cooperative-Competitive Battle Game} \cite{zheng2018magent} contains two groups of homogenous agents fighting against each other in a 2D grid. The goal of each group is to destroy all opponents in the other group. Each group has $64$ agents. Each agent can move to the neighbourhood or take attack actions. The setting is illustrated in Fig.~\ref{fig:game}. For each agent, the default reward setting is: $-0.005$ for every move (M), $0.2$ for attacking an opponent (AO), $5$ for killing an opponent (K), $-0.1$ for attacking an empty grid (AE). If an agent is attacked or destroyed, it also receives a reward $-1$. To adapt the game for MFG, we supply an additional term to the default reward function according to the following intuition: it is less risky to attack a foe if more friends are nearby. The mean field can thus be involved in the reward to measure the average distance between an agent and all other friends. Formally, let $r(s,a)$ denote the default reward and write modified reward function by:
\begin{equation*}
	r(s,a,\mu) = r(s,a) - d \cdot \mathds{1}_{\{a = \text{AO}\}} \cdot \Ebb_{s' \sim \mu} \left[ \mathtt{dist}(s,s') \right],
\end{equation*}
where $\mathtt{dist}$ denotes the Manhattan distance and $d>0$ controls its importance. 
In experiments, we set $d = 0.1$.
We generate two types of environments: (1) Mixed cooperative-competitive: the reward of each agent at each step is defined above. (2) Fully cooperative: the reward of each agent is set as the societal reward of the group it belongs to.
We train experts (of each group) using MADDPG \cite{lowe2017multi}, a multi-agent actor-critic algorithm. If an agent is destroyed halfway, we treat all variables of it as null in the subsequent training. 

\subsubsection*{Results.} Results are reported in Fig.~\ref{fig:game}. Consistently, MFIRL demonstrates higher accuracy in non-cooperative environments and higher sample efficiency in both cooperative and non-cooperative environments. PLIRL again shows large deviations from the expert performance. 
To summarise, MFIRL can accurately recover individual reward functions for MFGs in the non-cooperative setting with high sample efficiency, in line with our theoretical analysis.  
\section{Conclusions and Future Work}
This paper amounts to an effort towards individual-level IRL for MFGs. We reveal that the reduction from MFG to MDP holds only for the fully cooperative setting, which restricts the suitability of existing IRL methods in general MFGs. 
In order to handle MFGs with general non-cooperative environments, we propose and formalise the individual-level IRL problem that asks for recovering an individual reward function for MFGs. To address this problem, we propose MFIRL, the first dedicated IRL framework for MFGs that can deal with both cooperative and non-cooperative environments. Moreover, by making a series of approximations to the MFIRL framework, we develop a practical algorithm effective for MFGs with unknown dynamics. Experiments on both cooperative and non-cooperative scenarios verify the advantages of MFIRL on reward recovery, sample efficient and robustness to changing dynamics. 

Alongside the direction opened up by this work, a straightforward future work is to scale MFIRL to MFGs with continuous or high-dimensional state-action spaces. Another promising work is to study IRL methods for general MFGs that can tolerate imperfect expert demonstrations. A third future work is to apply MFIRL to more real-world scenarios, e.g., dynamic demand management in power grids \cite{bagagiolo2014mean} and behaviour analysis in large social media \cite{yang2018learning}.




\newpage
\balance
\bibliographystyle{ACM-Reference-Format} 
\bibliography{main}

\newpage
\onecolumn

\setcounter{section}{0}
\setcounter{equation}{0}
\renewcommand{\theequation}{\thesection\arabic{equation}}
\newtheorem{lem}{Lemma}[section]
\newtheorem{cor}{Corollary}[section]
\newtheorem{prop}{Proposition}[section]
\newcommand{\boltz}{B_{\beta}}

\appendix

\begin{center}
	{\LARGE\bf Appendices}
\end{center}

\section{Derivation of Gradients}\label{app:gradients}

We present the detailed derivation of $\nabla \tilde{\pi}^\omega_t (a \vert s)$ below.

\begin{equation*}
\begin{aligned}
    \nabla \tilde{\pi}^\omega_t (a \vert s) & = \tilde{\pi}^\omega_t (a \vert s) \nabla \ln \tilde{\pi}^\omega_t (a \vert s) \\
    & = \tilde{\pi}^\omega_t (a \vert s) \nabla \ln \left( \frac{\exp\left(\beta \tilde{Q}_\omega^*\left(t,s,a,\hat{\muv}^E\right)\right)}{\sum_{a' \in \Amc} \exp\left(\beta \tilde{Q}_\omega^*\left(t,s,a',\hat{\muv}^E\right)\right)  }  \right)\\
    & = \tilde{\pi}^\omega_t (a \vert s) \nabla \left[ \beta \tilde{Q}_\omega^*\left(t,s,a,\hat{\muv}^E\right) - \ln \sum_{a' \in \Amc} \exp \left( \beta \tilde{Q}_\omega^*\left(t,s,a',\hat{\muv}^E\right) \right) \right]\\
    & = \tilde{\pi}^\omega_t (a \vert s)  \left[ \beta \nabla \tilde{Q}_\omega^*\left(t,s,a,\hat{\muv}^E\right) - \frac{\nabla \sum_{a' \in \Amc} \exp \left( \beta \tilde{Q}_\omega^*\left(t,s,a',\hat{\muv}^E\right) \right)}{\sum_{a'' \in \Amc} \exp \left( \beta \tilde{Q}_\omega^*\left(t,s,a'',\hat{\muv}^E\right) \right) } \right]\\
    & = \tilde{\pi}^\omega_t (a \vert s) \left[ \beta \nabla \tilde{Q}_\omega^*\left(t,s,a,\hat{\muv}^E\right) - \sum_{a' \in \Amc} \frac{\exp\left( \beta \tilde{Q}_\omega^*(t,s,a',\hat{\muv}^E) \right)\cdot \beta \cdot \nabla \tilde{Q}_\omega^*(t,s,a',\hat{\muv}^E)}{\sum_{a'' \in \Amc} \exp \left( \beta \tilde{Q}_\omega^*\left(t,s,a'',\hat{\muv}^E\right) \right)} \right]\\
    & = \tilde{\pi}^\omega_t (a \vert s) \left[ \beta \nabla \tilde{Q}_\omega^*\left(t,s,a,\hat{\muv}^E\right) - \sum_{a' \in \Amc} \tilde{\pi}_t^\omega(a' \vert s) \cdot \beta \cdot \nabla \tilde{Q}_\omega^*(t,s,a',\hat{\muv}^E)  \right]\\
    & = \tilde{\pi}^\omega_t (a \vert s) \cdot \beta \bigg[\nabla \tilde{Q}_{\omega}^*(t,s,a,\hat{\muv}^E)  - \sum_{a' \in \Amc} \tilde{\pi}^\omega_t (a' \vert s)  \nabla \tilde{Q}_{\omega}^*(t,s,a',\hat{\muv}^E) \bigg]\\
    & = \tilde{\pi}^\omega_t (a \vert s) \cdot \beta \bigg[\nabla \tilde{Q}_{\omega}^*(t,s,a,\hat{\muv}^E) 
          - \Ebb_{a' \sim \tilde{\pi}^\omega_t (\cdot \vert s)} \left[ \nabla \tilde{Q}_{\omega}^*(t,s,a',\hat{\muv}^E) \right] \bigg].
\end{aligned}
\end{equation*}

\section{Numerical Model Descriptions}\label{app:task}

\subsection{Investment in Product Quality}

{\bf Model.} This model is adapted from \cite{weintraub2010computational} and \cite{subramanian2019reinforcement} that captures the investment decisions in a fragmented market with a large number of firms. Each firm produces the same kind of product. The state of a firm $s \in \Smc = \{ 0,1, \ldots, 9\}$ denotes the product quality. At each step, each firm decides whether or not to invest in improving the quality of the product. Thus the action space is $\Amc = \{0, 1\}$. When a firm decides to invest, the quality of the product manufactured by it increases uniformly at random from its current value to the maximum value 9, if the average mean field for that product is below a particular threshold $q$. If this average mean field value is above $q$, then the product quality gets only half of the improvement as compared to the former case. This implies that when the average quality in the economy is below $q$, it is easier for each firm to improve its quality. When a firm does not invest, its product quality remains unchanged. Formally, the dynamics is given by:

\begin{equation*}
	s_{t+1} = \left\{
	\begin{aligned}
		 & s_{t} + \lfloor \chi_t  ( 10- s_{t} ) \rfloor, \text{~~if~~}\langle \mu_{t} \rangle < q \text{~~and~~} a_{t} = 1\\
		 & s_{t} + \lfloor \chi_t  ( 10- s_{t} ) /2 \rfloor, \text{~~if~~}\langle \mu_{t} \rangle \geq q\text{~~and~~}a_{t} = 1\\
		 & s_{t}, \text{~~if~~}a_{t} = 0
	\end{aligned}\right..
\end{equation*} 

An agent incurs a cost due to its investment and earns a positive reward due to its own product quality and a negative reward due to the average product quality, which we denote by $\langle \mu_t \rangle \triangleq \sum_{s \in \Smc} s \cdot \mu_t(s)$. The final reward is given as:
\begin{equation*}
	r(s_t,a_t,\mu_t) =  d \cdot s_{t} / 10 - c \cdot \langle \mu_{t} \rangle - \alpha \cdot a_{t} 
\end{equation*}
\smallskip

\noindent{\bf Settings.}  We set $d=0.3$, $c=0.2$, $\alpha=0.2$ and probability density $f$ for $\chi_t$ as $U(0,1)$. We set the threshold $q$ to $4$ and $5$ for the original and new environments, respectively. The initial mean field $\mu_0$ is set as a uniform distribution, i.e, $\mu_0(s) = 1 / |\Smc|$ for all $s \in \Smc$.

\subsection{Malware Spread}

{\bf Model.}  The malware spread model is presented in \cite{huang2016mean,huang2017mean} and used as a numerical study for MFG in \cite{subramanian2019reinforcement}. This model is representative of several problems with positive externalities, such as flu vaccination and economic models involving the entry and exit of firms. 
Here, we present a discrete version of this problem:  Let $\Smc = \{0, 1, \ldots, 9\}$ denote the state space (level of infection), where $s = 0$ is the most healthy state and $s = 9$ is the least healthy state. The action space $\Amc = \{0, 1\}$, where $a = 0$ means $\mathtt{Do Nothing}$ and $a = 1$ means $\mathtt{Intervene}$. The dynamics is given by
\begin{equation*}
	s_{t+1} = \left\{
	\begin{aligned}
		& s_t + \lfloor \chi_t  ( 10- s_t ) \rfloor,  \text{~~if~~}  a_t = 0\\
		& 0, \text{~~if~~}a_t = 1
	\end{aligned}\right.,
\end{equation*}
where $\{\chi_t\}_{0\leq t \leq T}$ is a $[0, 1]$-valued i.i.d. process with probability density $f$. The above dynamics means the $\mathtt{Do Nothing}$ action makes the state deteriorate to a worse condition; while the $\mathtt{Intervene}$ action resets the state to the most healthy level. Rewards are coupled through the average mean field, i.e., $\langle \mu_{t} \rangle$. An agent incurs a cost $(k + \langle \mu_t \rangle) s_t$, which captures the risk of getting infected, and an additional cost of $\alpha$ for performing the $\mathtt{Intervene}$ action. The reward sums over all negative costs:
\begin{equation*}
	r(s_t, a_t, \mu_t) = -(k + \langle \mu_t \rangle)s_t/10 - \alpha \cdot a_t.
\end{equation*}
\smallskip

\noindent{\bf Settings.} Following \cite{subramanian2019reinforcement}, we set $k = 0.2$, $\alpha = 0.5$, and the probability density $f$ to the uniform distribution $U(0,1)$ for the original dynamics. We change the density $f$ to uniform distribution $U(0.5,1)$ for new dynamics. The initial mean field $\mu_0$ is set as a uniform distribution.

\subsection{Virus Infection} 

{\bf Model.} This is a virus infection used as a case study in \cite{cui2021approximately}. There is a large number of agents in a building. Each of them can choose between ``social distancing'' ($D$) or ``going out'' ($U$). If a ``susceptible'' ($S$) agent chooses social distancing, they may not become ``infected'' ($I$). Otherwise, an agent may become infected with a probability proportional to the number of agents being infected. If infected, an agent will recover with a fixed chance every time step. Both social distancing and being infected have an associated negative reward.
Formally, let $\Smc = \{S,I\}, \Amc = \{U,D\}, r(s,a, \mu_t) = -\mathds{1}_{\{s = I\}} - 0.5 \cdot \mathds{1}_{\{s = D\}}$. The transition probability is given by
\begin{equation*}
	\begin{aligned}
		P(s_{t+1} = S \vert s_t = I, \cdot, \cdot) & = 0.3\\
		P(s_{t+1} = I \vert s_t = S, a_t = U, \mu_t) & = 0.9^2 \cdot \mu_t(I)\\
		P(s_{t+1} = I \vert s_t = S, a_t = D, \cdot) &= 0.
	\end{aligned}
\end{equation*} 
\smallskip

\noindent{\bf Settings.} The initial mean field $\mu_0$ is set as a uniform distribution. We modify the transition function for the new dynamics as follows:
\begin{equation*}
	\begin{aligned}
		P(s_{t+1} = S \vert s_t = I, \cdot, \cdot) & = 0.3\\
		P(s_{t+1} = I \vert s_t = S, a_t = U, \mu_t) & = 0.8^2 \cdot \mu_t(I)\\
		P(s_{t+1} = I \vert s_t = S, a_t = D, \cdot) &= 0.
	\end{aligned}
\end{equation*}

\subsection{Rock-Paper-Scissors} This model is adapted by \cite{cui2021approximately} from the generalized non-zero-sum version of {\em Rock-Paper-Scissors} game. 
Each agent can choose between ``rock'' ($R$), ``paper'' ($P$) and ``scissors'' ($S$), and obtains a reward proportional to double the number of beaten agents minus the number of agents beating the agent. Formally, let $\Smc = \Amc = \{R, P, S\}$, and for any $a \in \Amc, \mu_t \in \Pmc(\Smc)$:
\begin{equation*}
	\begin{aligned}
		r(R, a, \mu_t) &= 2 \cdot \mu_t(S) - 1\cdot \mu_t(P),\\
		r(P, a, \mu_t) &= 4 \cdot \mu_t(R) - 2 \cdot \mu_t(S),\\
		r(S, a, \mu_t) &= 6 \cdot \mu_t(P) - 3 \cdot \mu_t(R).
	\end{aligned}
\end{equation*}

The transition function is deterministic: $p(s_{t+1} \vert s_t, a_t, \mu_t) = \mathds{1}_{\{s_{t+1} = a_t\}}.$
\smallskip

\noindent{\bf Settings.} The initial mean field $\mu_0$ is set as a uniform distribution. Same to the setting in Left-Right, for new dynamics, we add randomness to the transition function such that with probability 0.2 picking next state arbitrarily.

\subsection{Left-Right}

{\bf Model.} This model is used in \cite{cui2021approximately}. There is a group of agents making sequential decisions to moving ``left'' or ``right''. At each step, each agent is at a position (state) either ``left'', ``right'' or ``center'', and can choose to move either ``left'' or ``right'', receives a reward according the current population density (mean field) at each position, and with probability one (dynamics) they reach ``left'' or ``right''. Once an agent leaves ``center'', she can never head back and can only be in left or right thereafter. Formally, we configure the MFG as follows: $\Smc = \{C, L, R\}$, $\Amc = \Smc \setminus \{C\}$, the reward $$r(s,a,\mu_t) = -\mathds{1}_{\{s = L\}}\cdot \mu_t(L) -\mathds{1}_{\{s = R\}}\cdot \mu_t(R).$$ This reward setting means each agent will incur a negative reward determined by the population density at her current position. The transition function is deterministic that directs an agent to the next state with probability one: $P(s_{t+1} \vert s_t, a_t, \mu_t) = \mathds{1}_{\{s_{t+1} = a_t\}}.$
\smallskip

\noindent{\bf Settings.} The initial mean field $\mu_0$ is set as $\mu_0(L) = \mu_0(R) = 0.5$. For new dynamics, we add the randomness to the transition function. With probability $0.8$, the agent moves to the state determined by the action, and with probability $0.2$, the agent randomly moves to either ``left'' or ``right''.

\section{Detailed Experiment Settings}\label{app:exp}

{\bf Feature representations.} We use one-hot encoding to represent states and actions. Let $\{1,2,\ldots, |\Smc|\}$ denote an enumeration of $\Smc$ and $\left[s_{[1]}, s_{[2]}, \ldots, s_{[|\Smc|]}\right]$  denote a vector of length $|\Smc|$, where each component stands for a state in $\Smc$. The state $j$ is denoted by $\big[0, \ldots, 0, s_{[j]} = 1, 0,$ $\ldots, 0 \big]$. An action is represented through the same manner. A mean field $\mu$ is represented by a vector $\left[\mu(s_{[1]}), \mu(s_{[2]}), \ldots, \mu(s_{[|\Smc|]})\right]$. 
\smallskip

\noindent{\bf Reward Models.} The reward mode $r_\omega$ takes as input the concatenation of feature vectors of $s$, $a$ and $\mu$ and outputs a scalar as the reward. We adopt the neural network (a four-layer perceptron) with the Adam optimiser and the Leaky ReLU activation function. The sizes of the two hidden layers are both 64. The learning rate is $10^{-4}$.
\smallskip

\noindent{\bf Detailed Settings for Numerical Models.} 
We sample expert trajectories with $T = 50$ time steps, 
consider $N = 100$ agents and set the discounted factor $\gamma=0.99$. We set the inverse Boltzmann temperature as $\beta =1$. 
In MFNE expert training, we repeat the fixed point iteration to compute the MF flow. We terminate at the $i$th iteration if the mean squared error over all steps and all state is below or equal to $10^{-10}$, i.e., $$\frac{1}{(T-1)|\Smc|} \sum_{t=1}^{T-1}\sum_{s \in \Smc} \left(\mu^{(i)}_t(s) - \mu^{(i-1)}_t(s)\right)^2 \leq 10^{-10}.$$



\end{document}